\newcommand{\rocauc}{AUC}
\newcommand{\logloss}{\emph{logloss}}
\newcommand{\OptHPs}{\emph{A-priori HPs}}
\newcommand{\optHPs}{\emph{a-priori HPs}}
\newcommand{\ValHPs}{\emph{Validation HPs}}
\newcommand{\valHPs}{\emph{validation HPs}}
\newcommand{\eg}{\emph{e.g.}}
\newcommand{\ie}{\emph{i.e.}}
\newcommand{\prm}{\emph{probability metric}}
\newcommand{\prms}{\emph{probability metrics}}
\newcommand{\cm}{\emph{label metric}}
\newcommand{\cms}{\emph{label metrics}}
\newtheorem{thm}{Theorem}
\newtheorem{lem}[thm]{Lemma}
  \providecommand\BibTeX{{%
    \normalfont B\kern-0.5em{\scshape i\kern-0.25em b}\kern-0.8em\TeX}}}
\begin{document}

\title{To SMOTE, or not to SMOTE?}

\ifbool{anonymous}{\author{anonymous}}
{
\author{Yotam Elor}
\email{yotame@amazon.com}
\affiliation{%
  \institution{Amazon}
  \state{New York}
  \country{USA}
}

\author{Hadar Averbuch-Elor}
\email{hadarelor@cornell.edu}
\affiliation{%
  \institution{Cornell University}
  \state{New York}
  \country{USA}
}

\renewcommand{\shortauthors}{Elor and Elor}
}

\begin{abstract}
Balancing the data before training a classifier is a popular technique to address the challenges of imbalanced binary classification in tabular data. Balancing is commonly achieved by duplication of minority samples or by generation of synthetic minority samples. While it is well known that balancing affects each classifier differently, most prior empirical studies did not include strong state-of-the-art (SOTA) classifiers as baselines. In this work, we are interested in understanding whether balancing is beneficial, particularly in the context of SOTA classifiers. Thus, we conduct extensive experiments considering three SOTA classifiers along the weaker learners used in previous investigations. Additionally, we carefully discern proper metrics, consistent and non-consistent algorithms and hyper-parameter selection methods and show that these have a significant impact on prediction quality and on the effectiveness of balancing. Our results support the known utility of balancing for weak classifiers. However, we find that balancing does not improve prediction performance for the strong ones. We further identify several other scenarios for which balancing is effective and observe that prior studies demonstrated the utility of balancing by focusing on these settings.
\end{abstract}

\begin{CCSXML}
<ccs2012>
<concept>
<concept_id>10010147.10010257.10010258.10010259.10010263</concept_id>
<concept_desc>Computing methodologies~Supervised learning by classification</concept_desc>
<concept_significance>500</concept_significance>
</concept>
<concept>
<concept_id>10010147.10010257.10010321.10010333.10010076</concept_id>
<concept_desc>Computing methodologies~Boosting</concept_desc>
<concept_significance>300</concept_significance>
</concept>
</ccs2012>
\end{CCSXML}

\ccsdesc[500]{Computing methodologies~Supervised learning by classification}
\ccsdesc[300]{Computing methodologies~Boosting}

\keywords{binary classification, imbalance, SMOTE, balancing, oversampling}

\maketitle

\section{Introduction}
\label{sec:intro}

In binary classification problems the classifier is tasked with classifying each sample to one of two classes. When the number of samples in the majority (bigger) class is considerably larger then the number of samples in the minority (smaller) class, the dataset is considered imbalanced. This skewness is believed to be challenging as both the classifier training process and the common metrics used to assess classification quality tend to be biased towards the majority class.

Numerous metrics have been proposed to address the challenges of imbalanced binary classification, \eg{}, Brier score, the area under the receiver operating characteristic curve (\rocauc{}), $F_\beta$ and AM; see the survey of \cite{tharwat2020classification}. They are all non-symmetric and associate a higher loss with classifying a minority sample wrong compared to a majority sample. These metrics can be separated into two groups based on the underlying predictions used to compute them: 1) \emph{Probability metrics} are calculated using the predicted class probabilities (\emph{e.g.}, \rocauc{} and Brier score); and 2) \emph{label metrics} are based on the predicted labels (\emph{e.g.}, $f_\beta$, balanced-accuracy and Jaccard similarity coefficient).

Machine learning classifiers typically optimize a symmetric objective function which associates the same loss with minority and majority samples. Thus, considering imbalanced classification problems, as previously noted in \cite{herschtal2004optimising,branco2016survey}, there seems to be a discrepancy between the classifier's symmetric optimization and the non-symmetric metric of interest which might result in an undesirable bias in the final trained model.

On the other hand, theoretical investigations into proper metrics and consistent algorithms suggest that the aforementioned discrepancy might not hinder prediction performance after all; see Section \ref{sec:proper_consistent}. In summary, these theoretical results state that when considering proper metrics or consistent classifiers, optimizing a symmetric objective function (commonly \logloss{}) results in optimizing the non-symmetric metric of interest, suggesting that there is no need to address the data imbalance. Furthermore, some studies propose that skewed class distribution is not the only (or even the main) factor that hinders prediction quality for imbalanced datasets but the difficulty arises from other phenomenons that are common in imbalanced data. These other contributing factors could be small sample size, class separability issues, noisy data, data drift and more~\cite{lopez2013insight,stefanowski2016dealing,sun2009classification}.

In light of these theoretical results and ideas, we revisit the empirical evaluation of the techniques proposed to address the challenges of imbalanced data. These techniques could be roughly divided to three: 1) \emph{preprocessing (or balancing)}, that augment the data, usually to be more balanced, before training the classifier; 2) \emph{specialized classifiers}, that are specific for imbalanced data \cite{ling2004decision,rankboost,galar2011review}. In many cases, this is achieved by increasing the weight of the minority samples in the objective function \cite{domingos1999metacost}; and 3) \emph{postprocessing}, where the classifier predictions are augmented during inference \cite{nunez2011post,Calibration9137689}. Due to its vast popularity, we focus on reevaluating the utility of balancing schemes.

The simplest balancing methods are either oversampling the minority class by duplicating minority samples or under-sampling the majority class by randomly removing majority samples. The idea of adding synthetic minority samples to tabular data was first proposed in SMOTE~\cite{SMOTE}, where synthetic minority samples are created by interpolating pairs of the original minority points. SMOTE is extremely popular: $156$ \href{https://scholar.google.com}{Google Scholar} papers and $60$ \href{https://dblp.org}{dblp} papers with the word "SMOTE" in the title were published in $2021$, $87$ \href{https://stackoverflow.com/}{Stackoverflow} questions discussing SMOTE were asked and dozens of blog posts demonstrating how to use it were published in $2021$. Additionally, more than $100$ SMOTE extensions and variants were proposed. See \cite{SMOTE_SURVEY} for a comprehensive survey of SMOTE-like methods and \cite{StopOversampling9761871} for additional statistics demonstrating the popularity of balancing in the community.

To evaluate the utility of balancing and SMOTE-like schemes we perform extensive experiments that, among other technical details, discern proper metrics, consistent and non-consistent classifiers; consider several hyper-parameters (HPs) tuning schemes; and include strong SOTA classifiers. While it is well known that balancing affects each classifier differently (see \eg{}, \cite{branco2016survey,LOPEZ20126585}), to the best of our knowledge, this is the first large scale evaluation of balancing that includes strong modern classifiers as baselines. Moreover, including these strong classifiers is important because they yield the best prediction quality of all methods tested.

Our main results and contributions are:\footnote{The code used in the experiments and our full results can be found in
\ifbool{anonymous}
{\href{https://github.com/anonymous}{https://github.com/anonymous}}
{\href{https://github.com/aws/to-smote-or-not}{https://github.com/aws/to-smote-or-not}}
}
\begin{itemize}[leftmargin=*]
    \item To the best of our knowledge, this is the first large scale study of balancing that includes SOTA classifiers as baselines. Specifically, we use lightGBM\cite{lgbm}, XGBoost\cite{XGBoost} and Catboost\cite{prokhorenkova2018catboost}.
    \item When the objective metric is proper we empirically show that:
    \begin{itemize}[leftmargin=*]
        \item Balancing could improve prediction performance for weak classifiers but not for the SOTA classifiers.
        \item The strong classifiers (without balancing) yield better prediction quality than the weak classifiers with balancing.
    \end{itemize}
    \item When the objective is a \cm{}, we empirically show that for strong classifiers:
    \begin{itemize}[leftmargin=*]
        \item Balancing and optimizing the decision threshold provide similar prediction quality. However, optimizing the decision threshold is recommended do to simplicity and lower compute cost.
        \item When balancing (instead of optimizing the decision threshold) simple random oversampling is as effective as the advanced SMOTE-like methods.
    \end{itemize}
    \item We identify several scenarios for which SMOTE-like oversampling can improve prediction performance and should be applied.
    \item We tie our empirical results with theoretical grounding and show that under common assumptions \rocauc{} is proper. While this is a straightforward corollary of previous work, to the best of our knowledge, we are the first to explicitly state that.
\end{itemize}

\section{Proper metrics and consistent algorithms}
\label{sec:proper_consistent}
In this section, we discuss proper metrics and algorithm consistency which have a large impact on performance and specifically on the effectiveness of balancing. We then show that under common assumptions the AUC metric is proper.

Modern machine learning classifiers typically aim at predicting the underlying probabilities by optimizing the symmetric logistic loss (\logloss). For example, the popular boosting forest methods XGBoost~\cite{XGBoost}, Catboost~\cite{prokhorenkova2018catboost} and LightGBM~\cite{lgbm} optimize \logloss{} by default. Neural network methods commonly optimize the softmax loss~\cite{he2016deep,krizhevsky2012imagenet} which is equivalent to \logloss{} when the number of classes is two. Thus, when using a \cm{} (which is based on class predictions), there is a need to convert the predicted class probabilities to class predictions. This is usually done by using a decision threshold, i.e., a sample is predicted \emph{positive} if the predicted probability to be \emph{positive} is higher than a predetermined threshold and \emph{negative} otherwise. As we will discuss below, the decision threshold can significantly affect performance and choosing it correctly is important.

A \prm{} is proper when it is optimized by a classifier predicting the true class probabilities. For example, it is easy to see that Brier score is proper and even though \rocauc{} is generally not proper~\cite{byrne2016note}, we show in Section \ref{sec:roc_auc_proper} that under common assumptions it is. Properness is not defined in respect to \cms{}. Hence, for these metrics, we use the notion of consistency. Roughly speaking, a learning algorithm is consistent in respect to a metric when, given enough training data, the algorithm converges to a solution that optimizes the metric. The consistency of \logloss{} optimization in respect to a large class of \cms{} (including balanced accuracy, $F_{\beta}$, AM, Jaccard similarity coefficient and more) was proved in \cite{menon2013statistical,koyejo2014consistent} where consistency is achieved by optimizing the decision threshold on held-out data. On the other hand, \logloss{} optimization is generally not consistent when using a fixed decision threshold.

\subsection{ROC-AUC}
\label{sec:roc_auc_proper}
In machine learning, it is commonly assumed that the observations comprising the dataset were sampled i.i.d. from an unknown distribution. While AUC is generally not proper~\cite{byrne2016note}, Theorem \ref{thm:auc_proper} below states that under that assumption it is.

\begin{thm}[AUC is proper]
\label{thm:auc_proper}
The model that predicts the true probabilities $Pr[y_i=1 | x_i]$ optimizes the expected \rocauc{} for a set of $N$ observations $(x_i, y_i)_{i=1}^N$ sampled i.i.d. from a distribution.
\end{thm}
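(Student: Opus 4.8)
The plan is to exploit the fact that \rocauc{} is a purely rank-based quantity: with the usual tie convention,
$$\rocauc = \frac{1}{n_+ n_-}\sum_{i:\,y_i=1}\ \sum_{j:\,y_j=0}\Big(\mathbf{1}[s_i > s_j] + \tfrac12\mathbf{1}[s_i = s_j]\Big),$$
where $s_i$ is the score the model assigns to $x_i$ and $n_+,n_-$ count positives and negatives. Because this depends on the scores only through the order they induce, a model is determined, for the purpose of \rocauc{}, by the permutation it places on the $N$ samples. Writing $\eta_i = Pr[y_i=1\mid x_i]$, it therefore suffices to show that, among all orderings, the one sorting the samples by decreasing $\eta_i$ maximizes the expected \rocauc{}; the true-probability model $s_i=\eta_i$ induces exactly this order. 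I would condition throughout on the features $x_1,\dots,x_N$, so that the $\eta_i$ become fixed numbers and the only remaining randomness is in the labels, which are independent $\mathrm{Bernoulli}(\eta_i)$ by the i.i.d.\ assumption, and restrict attention to the event $n_+,n_-\ge 1$ on which \rocauc{} is defined.

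First I would set up an adjacent-transposition (exchange) argument. Compare two orderings that agree everywhere except that they swap two samples $a$ and $b$ occupying consecutive ranks. For every pair of samples other than $\{a,b\}$ the relative order is unchanged, so every summand of the double sum above is identical except the single $(a,b)$ term; the normalizer $n_+n_-$ depends on the labels alone and so is the same for both orderings. Hence the difference in \rocauc{} between ranking $a$ above $b$ and ranking $b$ above $a$ collapses to
$$\frac{1}{n_+ n_-}\Big(\mathbf{1}[y_a=1,\,y_b=0] - \mathbf{1}[y_a=0,\,y_b=1]\Big).$$

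The step I expect to be the main obstacle is taking the expectation of this difference, because the random normalizer $n_+n_-$ is correlated with the numerator. I would resolve this by conditioning additionally on the labels of the other $N-2$ samples, say $m_+$ positives and $m_-$ negatives among them. The numerator is nonzero only on the two events in which exactly one of $a,b$ is positive, and on both of these events $n_+ = m_+ + 1$ and $n_- = m_- + 1$, so the normalizer is the same constant $(m_++1)(m_-+1)$ and factors out. Using independence of $y_a,y_b$ (given the features) from the other labels, the conditional expected difference equals
$$\frac{\eta_a(1-\eta_b) - (1-\eta_a)\eta_b}{(m_++1)(m_-+1)} = \frac{\eta_a-\eta_b}{(m_++1)(m_-+1)},$$
whose sign matches that of $\eta_a-\eta_b$. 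Averaging over the other labels keeps the factor positive, so placing the higher-$\eta$ sample above the lower-$\eta$ one never decreases the expected \rocauc{}.

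Finally I would close the argument by sorting. Any ordering can be turned into the $\eta$-sorted ordering by a sequence of adjacent transpositions, each correcting an inversion relative to $\eta$; by the previous step each such swap weakly increases the expected \rocauc{}, so the $\eta$-sorted ordering is optimal conditionally on the features, and therefore optimal in expectation over them as well. Samples with equal $\eta_i$ may be ordered arbitrarily, since both orders give the same expectation, consistent with the $\tfrac12$ tie term, so the true-probability model $s_i=\eta_i$, which realizes this order exactly, attains the optimum. The only loose ends are the tie convention and the degenerate all-positive or all-negative outcomes, both of which are handled by the conditioning above.
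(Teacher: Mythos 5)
Your proof is correct, but it takes a genuinely different route from the paper's. The paper's proof is a two-step reduction: it decomposes the i.i.d.\ sampling of $(x_i,y_i)_{i=1}^N$ into first drawing the feature vectors and then drawing each label independently from $y_i\mid x_i$, and then invokes Theorem 7 of Byrne (restated as a lemma) as a black box for the conditional statement. You share the outer step --- conditioning on the features so that the labels become independent Bernoulli$(\eta_i)$ and then averaging over features at the end --- but instead of citing the known result you reprove it from first principles: you reduce \rocauc{} to the ordering it induces, run an adjacent-transposition exchange argument in which only the single $(a,b)$ pair term changes, and, crucially, neutralize the correlation between the numerator and the random normalizer $n_+n_-$ by conditioning on the remaining $N-2$ labels, on which the normalizer is the constant $(m_++1)(m_-+1)$ over the only two events where the numerator is nonzero; bubble-sorting then shows the $\eta$-sorted order is optimal, with ties and the all-one-class events handled explicitly. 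What the paper's version buys is brevity and a clean attribution of the substantive content to prior work; what yours buys is a self-contained argument that makes visible \emph{why} the conditional claim holds, at the cost of re-deriving a published theorem. Both are valid proofs of the stated result.
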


\begin{proof}
The process of obtaining a dataset of size $N$ sampled i.i.d. from a distribution could be split into two phases: (1) sample i.i.d. $N$ feature vectors $\{x_i\}_{i=1}^N$ from the distribution and (2) sample each label $y_i$ independently from the corresponding marginal distribution $y_i|x_i$.  Considering that process, Theorem \ref{thm:auc_proper} is a direct corollary of Theorem 7 of \cite{byrne2016note}, restated below as Lemma \ref{lem:byrne}.
\end{proof}

\begin{lem}[Theorem 7 of \cite{byrne2016note}]
\label{lem:byrne}
The model that predicts the true probabilities $Pr[y_i=1 | x_i]$ optimizes the expected \rocauc{} for any set of $N$ feature vectors $(x_i)_{i=1}^N$ where the labels $y_i$ are sampled i.i.d.
\end{lem}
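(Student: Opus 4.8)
The plan is to exploit the pairwise combinatorial form of the metric together with an exchange (bubble-sort) argument, the only real difficulty being the random normalization by the number of positive--negative pairs. First I would recall that for scores $s_i = s(x_i)$ and labels $y_i \in \{0,1\}$, writing $P = \{i : y_i = 1\}$ and $\bar{P} = \{j : y_j = 0\}$,
\[
  \mathrm{AUC} = \frac{1}{|P|\,|\bar{P}|}\sum_{i \in P}\sum_{j \in \bar{P}}\left(\mathbf{1}[s_i > s_j] + \frac{1}{2}\,\mathbf{1}[s_i = s_j]\right),
\]
so the metric depends on $s$ only through the ranking it induces. Since predicting the true probabilities $p_i = Pr[y_i = 1 \mid x_i]$ induces the ranking by decreasing $p_i$, it suffices to prove that this ranking maximizes the expected \rocauc{} over the random labels.

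Next I would run an exchange argument on adjacent elements. Fix any ranking and two elements $i,j$ that are adjacent in it with $p_i > p_j$ but with $j$ ranked above $i$ (an inversion). Swapping $i$ and $j$ alters no other pair's relative order, so for a fixed labelling the \rocauc{} changes only when exactly one of $i,j$ is positive. Writing $D = |P|\,|\bar{P}|$, $A = \{y_i = 1, y_j = 0\}$ and $B = \{y_i = 0, y_j = 1\}$, the change is $\Delta = D^{-1}\left(\mathbf{1}[A] - \mathbf{1}[B]\right)$, and $\Delta = 0$ whenever $y_i = y_j$. Crucially $D \ge 1$ on both $A$ and $B$, so $\Delta$ is always well defined and the degenerate configurations $|P| = 0$ or $|\bar{P}| = 0$ contribute nothing.

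The crux is to evaluate $E[\Delta]$. Because the labels are drawn independently, the remaining $N-2$ labels, and hence $D = (1 + P')(1 + \bar{P}')$ where $P'$ and $\bar{P}'$ count the positives and negatives among the indices $k \neq i,j$, have the same conditional law given $A$ as given $B$; the two events differ only in which of $i,j$ carries the single positive label. Hence $E[D^{-1} \mid A] = E[D^{-1} \mid B] =: c > 0$, and
\[
  E[\Delta] = c\left(Pr[A] - Pr[B]\right) = c\left(p_i(1 - p_j) - p_j(1 - p_i)\right) = c\,(p_i - p_j) > 0.
\]
So repairing any adjacent inversion strictly increases the expected \rocauc{} (and leaves it unchanged when $p_i = p_j$). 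A bubble-sort then concludes: from an arbitrary ranking, repeatedly swapping adjacent out-of-order pairs reaches the order of decreasing $p_i$ without ever decreasing the expected \rocauc{}, so that order---equivalently, the model predicting $p_i$---is optimal.

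I expect the main obstacle to be exactly this random denominator $D$: one must justify that conditioning on $A$ versus $B$ does not bias $D$, which is precisely where independence of the labels is used, and one must confirm that the undefined degenerate labellings can be discarded without affecting the comparison. A secondary point of care is the tie-breaking convention when $p_i = p_j$, which the same computation renders immaterial since then $E[\Delta] = 0$.
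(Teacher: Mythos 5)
Your argument is correct, but note that the paper itself offers no proof of this lemma to compare against: it is imported verbatim as Theorem~7 of \cite{byrne2016note}, and the paper's only proof work is the reduction of Theorem~\ref{thm:auc_proper} to it. What you have supplied is therefore a self-contained derivation of the cited result, and it holds up. The one genuinely delicate point --- that the random normalizer $D=|P|\,|\bar P|$ does not bias the comparison between the events $A$ and $B$ --- is handled correctly: conditioned on either event the remaining $N-2$ labels retain their unconditional law by independence, so $E[D^{-1}\mid A]=E[D^{-1}\mid B]$ and the sign of $E[\Delta]$ is that of $p_i-p_j$. Two small points deserve explicit mention if you write this up. First, the lemma's phrase ``sampled i.i.d.''\ must be read as ``sampled independently, each from its own marginal $y_i\mid x_i$''; if the labels were literally identically distributed all $p_i$ would coincide and the claim would be vacuous, and your computation of $\Pr[A]=p_i(1-p_j)$ correctly uses only independence with heterogeneous $p_i$. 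Second, your exchange argument optimizes over \emph{strict} rankings, whereas an arbitrary score function may produce ties; to close the loop, observe that the $\tfrac12$ tie convention makes the expected \rocauc{} of a tied ranking equal to the average over uniform tie-breakings of strict rankings, each of which is dominated by the descending-$p$ order, and that keeping the ties among items with equal $p_i$ costs nothing since there $E[\Delta]=0$. With those two remarks your bubble-sort argument is complete.
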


\section{Experimental Setup}
\label{sec:setup}
In this section we describe the experimental setup used in the study.

\subsection{Data}
We considered the $104$ datasets of \cite{kovacs2019empirical} and $24$ dataset of \emph{imbalanced-learn}\footnote{\href{https://imbalanced-learn.org/stable/}{https://imbalanced-learn.org/}} but not in \cite{kovacs2019empirical}. We removed $48$ datasets for which the baseline \rocauc{} score Catboost achieved was very high. For $19$ datasets the baseline score was $1$ and for $29$ datasets the score was higher than $0.99$. We removed these saturated datasets because our goal is to test balancing methods aiming at improving prediction performance and it is unreasonable to expect such improvements for datasets where the baseline score is almost perfect. An additional $7$ datasets were removed due to various technical reasons: The datasets winequality-white-9\_vs\_4, zoo-3 and ecoli-0-1-3-7\_vs\_2-6 had less than $8$ minority samples which caused SMOTE to fail. The datasets habarman was not found in the keel repository and for winequality-red-3\_vs\_5 the baseline score was $0$. SVM-SMOTE failed for winequality-red-8\_vs\_6-7 and poker-9\_vs\_7 so they were also removed. Finally, we experimented with the remaining $73$ datasets.

\subsection{Preprocessing}
String features were encoded as ordinal integers. No preprocessing was applied to the numeric features. The target majority class was encoded with 0 and the minority with 1.

\subsection{Oversamplers}
We experimented with the following common oversamplers\footnote{The implementation of \emph{imbalanced-learn} was used for all oversamplers but Poly, for which the implementation of \emph{smote-variants} was used, \href{https://github.com/analyticalmindsltd/smote\_variants}{https://github.com/analyticalmindsltd/smote\_variants}}:
\begin{description}[leftmargin=0.0cm]
\item[Random Oversampling (Random),] where minority samples are randomly duplicated.
\item[SMOTE~\cite{SMOTE}.]
\item[SVM-SMOTE (SVM-SM)~\cite{SVM-SMOTE},] a SMOTE variant aiming to create synthetic minority samples near the decision line using a kernel machine.
\item[ADASYN~\cite{he2008adasyn},] an adaptive synthetic oversampling approach aiming to create minority samples in areas of the feature space which are harder to predict.
\item[Polynomfit SMOTE (Poly)~\cite{Gazzah},] a variant of SMOTE that was selected due to its superior performance in the experiments of \cite{kovacs2019empirical}.
\end{description}

\subsection{Classifiers}
We experimented with seven classifiers\footnote{We have used the implementation of scikit-learn (1.0.1) for decision tree, SVM, MLP and Adaboost. The python implementation was used for LGBM (3.3.1), XGBoost (1.5.0) and Catboost (1.0.1)}. A simple decision tree was used due to its similarity to the C4.5 learning algorithm used in many other SMOTE studies (\eg, \cite{SMOTE,han2005borderline}). Following the empirical investigations of \cite{kovacs2019empirical,lopez2013insight}, a support vector machine (SVM) and multi layer perceptron (MLP) classifiers were considered. Adaboost\cite{schapire1990strength} was included because numerous boosting methods for imbalanced data are based on it~\cite{galar2011review,LOPEZ20126585}. Finally, three SOTA boosted forest algorithms were used: XGBoost~\cite{XGBoost}, Catboost~\cite{prokhorenkova2018catboost} and lightGBM~\cite{lgbm}. For each classifier, four HP configurations were considered, see Table \ref{tab:classifier_hps}.
\begin{table}
    \begin{tabular}{ll}
        & \textbf{HP Configurations} \\ 
        \toprule
        LGBM & $\eta = 0.1$, $N = 100$, $subsample = 1$ \\
        & $\eta = 0.1$, $N = 100$, $subsample = 0.66$ \\
        & $\eta = 0.025$, $N = 1000$, $subsample = 1$ \\
        & $\eta = 0.025$, $N = 1000$, $subsample = 0.66$ \\
        \midrule
        XGBoost & $\eta = 0.1$, $N = 1000$, $subsample = 1$ \\
        & $\eta = 0.1$, $N = 1000$, $subsample = 0.66$ \\
        & $\eta = 0.025$, $N = 1000$, $subsample = 1$ \\
        & $\eta = 0.025$, $N = 1000$, $subsample = 0.66$ \\
        \midrule
        Catboost & Default (heuristics-based) \\
        & $N = 500$ \\
        & boosting\_type = ordered \\
        & $N = 500$, boosting\_type = ordered \\
        \midrule
        SVM & $C=1$, loss = squared\_hinge \\
        & $C=1$, loss = hinge \\
        & $C=10$, loss = squared\_hinge \\
        & $C=10$, loss = hinge \\
        \midrule
        MLP & hidden\_layer = $10\%$, activation = relu \\
        & hidden\_layer = $50\%$, activation = relu \\
        & hidden\_layer = $100\%$, activation = relu \\
        & hidden\_layer = $50\%$, activation = logistic \\
        \midrule
        Decision & min\_samples\_leaf = $0.5\%$, $1\%$, $2\%$, \\
        tree & $4\%$ of samples \\
        \midrule
        Adaboost & $N = 10,20,40,80$ \\
        \midrule
    \end{tabular}
    \caption{Classifier HP configurations used in our experiments.}
    \label{tab:classifier_hps}
\end{table}

\subsection{Metrics \& Consistency}
We experimented with the following proper \prms{}: \rocauc{}, \logloss{} and Brier score; and with the following \cms{}: $F_1$, $F_2$, Jaccard similarity coefficient and balanced accuracy. For the \cms{}, we experimented with non-consistent classifiers by using a fixed decision threshold of $0.5$ and with consistent ones by optimizing the decision threshold on the validation fold, as proposed in \cite{koyejo2014consistent}.

\subsection{Hyper-parameters (HPs)}
All balancing methods have some HPs that needs to be set. For example, an important one that all balancing techniques share is the desired ratio between positive and negative samples --- in other words, how many minority samples to generate. A common HP selection practice is to use the HP set that yields the best results on the testing data. This practice was used, for example, in the large empirical study of \cite{kovacs2019empirical} and when SMOTE was first proposed \cite{SMOTE}. This may be reasonable when the scientist has previous experience with similar data and knows a-priori how to properly set the HPs. When no such prior knowledge exists, it is common to optimize the HPs on a validation fold. We will denote the former practice \optHPs{} and the latter \valHPs{}. The oversampler HPs used in our experiments are detailed in Table \ref{tab:oversampler_hps}. In addition to the HPs listed in Table \ref{tab:oversampler_hps}, desired ratios of $0.1$, $0.2$, $0.3$, $0.4$ and $0.5$ were considered for all oversamplers.

\begin{table}
    \begin{tabular}{ll}
        & \textbf{HP Configurations} \\ 
        \toprule 
        SMOTE \& ADASYN & $k_n = 3, 5, 7, 9$ \\
        \midrule
        SVM-SMOTE & $k_n = 4$, $m_n = 8$ \\
        & $k_n = 4$, $m_n = 12$ \\
        & $k_n = 6$, $m_n = 8$ \\
        & $k_n = 6$, $m_n = 12$ \\
        \midrule
        Poly & topology = star, bus, mesh \\
        \midrule
    \end{tabular}
    \caption{Oversampler HP configurations used in our experiments.}
    \label{tab:oversampler_hps}
\end{table}

\subsection{Method} Each dataset was randomly stratified split into training, validation and testing folds with ratios 60\%, 20\% and 20\% respectively. To evaluate the oversampling methods, the training fold was oversampled using each of the oversamplers. Each of the classifiers was trained on the augmented training fold with early stopping on the (not-augmented) validation fold when applicable. For the \cms, in order to make the classifiers consistent, the threshold was optimized on the validation fold, as was suggested in \cite{menon2013statistical}. Finally, the metrics were calculated for the validation and test folds. For each set of \{dataset, oversampler, oversampler-HPs, classifier, classifier-HPs\} the experiment was repeated seven times using different random seeds and different data splits.
\section{Experimental Results}
\label{sec:results}
Next we present our results, which were obtained using the experimental setup described in Section \ref{sec:setup}. In our study, we seek to understand the effects of balancing on prediction performance considering several factors, including (i) weak vs strong classifiers; (ii) \optHPs{} vs. \valHPs{}; and (iii) balancing vs optimizing the decision threshold for \cms{}.

\subsection{A-priori vs. validation HPs}
\label{sec:apriori-vs-validation}
First we investigate the effects of HPs on oversampler performance. To avoid the complexity arising from \cms{} and the need to select a decision threshold we use \rocauc{}, the proper (and popular) metric. See Figures \ref{fig:hps_auc}(a) and \ref{fig:hps_auc}(b) for \rocauc{} results with \emph{a-priori} and \valHPs{} respectively. The purple dots represent mean AUC and the orange triangles correspond to mean rank (lower is better), both aggregated over all datasets and random seeds. The error bars correspond to one standard deviation of the mean or rank over the random seed. We follow \cite{lopez2013insight} and also report the rank as an analysis of the mean score could be sensitive to outliers: a single dataset with a very high or low score could have a large impact on the average. The figure is separated vertically into seven parts, one for each classifier. In each part, the first value, marked with "+Default", is the the classifier with default HPs trained without any data augmentations. This was added as a sanity check to verify the utility of our HP optimization process. The second value is the baseline, \ie, a classifier with optimized HPs and no balancing. The remaining five values correspond to oversampling the training data using the various methods.

With \optHPs{}, presented in Figure \ref{fig:hps_auc}(a), all four SMOTE variants achieved considerably better prediction quality than their respective baselines. They were also better than the simple random oversampler. Among the four SMOTE variants, Poly performed slightly better, in accordance with the results of \cite{kovacs2019empirical}. It's important to note that balancing considerably increases the number of HPs. Thus, there is a greater risk of overfitting them when tuning, and therefore, the benefits of oversampling with \optHPs{} observed in our experiments could be attributed to overfitting the HPs. These experiments are included in our study for two reasons: First, we believe that exceptionally good HPs could rise from domain expertise and second, this is a common practice in previous investigations.

Considering \valHPs{}, presented in Figure \ref{fig:hps_auc}(b), balancing significantly improved prediction for the weak classifiers: MLP, SVM, decision tree, Adaboost and LGBM. However, for the stronger classifiers, XGBoost and Catboost, it did not. For XGBoost, no oversampler achieved either mean or rank improvement of more than one standard deviation. The strong Catboost baseline yielded the best average \rocauc{} overall and had the second best rank --- after Catboost oversampled with SVM-SMOTE.

Thus, in the common scenario, when interested in a proper metric and good oversampler HPs are not a-priori known, best prediction is achieved by using a strong classifier and balancing is not beneficial. However, balancing is effective when using a weak classifier or when exceptionally good HPs are a-priori available.

\subsection{The effects of balancing on \rocauc{} and \logloss{}}
\label{sec:logloss_eval}
To gain further insights into balancing, see Table \ref{tab:loglossclasses} which details the \logloss{} and \rocauc{} achieved on the test set with \valHPs. From the \logloss{} scores, it appears that the superior \rocauc{} performance of Catboost was achieved simply by predicting the underlying probabilities better, \ie{} lower \logloss{}. In other words, a superior model for the proper non-symmetric objective (\rocauc) was achieved by optimizing the symmetric loss (\logloss) better. 

The effects of balancing could be observed in the two central columns providing \logloss{} scores for the minority and majority classes separately. In all cases, balancing the data results in better \logloss{} for the minority samples, worse \logloss{} for the majority samples and worse \logloss{} overall. Thus, balancing was successful in shifting the focus of the classifiers toward the minority class. However, while for the weaker classifiers (MLP, SVM, decision tree, Adaboost and LGBM) this shift is correlated with better \rocauc{} performance, for the stronger classifiers, it is not. We leave explaining this disparity between weak and strong classifiers for future work. Note that similar analysis was performed in \cite{weiss2003learning}, however only the decision tree classifier was considered.

\subsection{Balancing vs. threshold optimization}
\label{sec:consistent_eval}
As discussed in Section \ref{sec:proper_consistent}, \logloss{} optimization is not consistent in respect to the popular $F_1$ metric when a fixed decision threshold is used. However, it could be made consistent by optimizing the decision threshold. While optimizing the decision threshold is not a common practice, in what follows we empirically demonstrate that it could have a significant impact on prediction performance, and also on the question of whether or not balancing should be applied.

Similarly to the \rocauc{} experiments, we start by evaluating the benefits of balancing. See Figure \ref{fig:f1_consistent} for average $F_1$ scores and rank using (a) a fixed decision threshold of $0.5$ and (b) an optimized decision threshold, both with \valHPs. When using a fixed threshold, balancing considerably improved prediction performance for all classifiers. Note, however, that the SMOTE-like methods were not significantly better than the simple random oversampler. On the other hand, when optimizing the threshold, balancing was beneficial only for the weak MLP and SVM\footnote{The decision threshold could not be optimized for SVM as it outputs only class predictions - and not class probabilities} classifiers.

\begin{table}
    \centering
    \begin{tabular}{llll}
& & \textbf{Threshold = $0.5$} & \textbf{Optimized threshold}\\ 
\toprule 
\parbox[t]{0pt}{\multirow{5}{*}{\rotatebox[origin=c]{90}{MLP}}} & Baseline & {0.188 $\pm$ 0.044} & {0.303 $\pm$ 0.055}\\ 
& Random & {0.398 $\pm$ 0.077} & \\ 
& SMOTE & \textbf{0.411 $\pm$ 0.080} & \\ 
& SVM-SM & {0.399 $\pm$ 0.078} & \\ 
& ADASYN & {0.404 $\pm$ 0.078} & \\ 
& Poly & {0.411 $\pm$ 0.078} & \\ 
\midrule 
\parbox[t]{0pt}{\multirow{5}{*}{\rotatebox[origin=c]{90}{SVM}}} & Baseline & {0.366 $\pm$ 0.021} & {0.366 $\pm$ 0.021}\\ 
& Random & {0.475 $\pm$ 0.013} & \\ 
& SMOTE & \textbf{0.483 $\pm$ 0.012} & \\ 
& SVM-SM & {0.482 $\pm$ 0.016} & \\ 
& ADASYN & {0.482 $\pm$ 0.012} & \\ 
& Poly & {0.479 $\pm$ 0.012} & \\ 
\midrule 
\parbox[t]{0pt}{\multirow{5}{*}{\rotatebox[origin=c]{90}{DT}}} & Baseline & {0.443 $\pm$ 0.017} & {0.471 $\pm$ 0.020}\\ 
& Random & {0.467 $\pm$ 0.017} & \\ 
& SMOTE & {0.462 $\pm$ 0.023} & \\ 
& SVM-SM & \textbf{0.477 $\pm$ 0.026} & \\ 
& ADASYN & {0.458 $\pm$ 0.016} & \\ 
& Poly & {0.472 $\pm$ 0.010} & \\ 
\midrule 
\parbox[t]{0pt}{\multirow{5}{*}{\rotatebox[origin=c]{90}{Ada}}} & Baseline & {0.454 $\pm$ 0.023} & {0.482 $\pm$ 0.015}\\ 
& Random & {0.488 $\pm$ 0.012} & \\ 
& SMOTE & {0.487 $\pm$ 0.016} & \\ 
& SVM-SM & {0.497 $\pm$ 0.018} & \\ 
& ADASYN & {0.491 $\pm$ 0.018} & \\ 
& Poly & \textbf{0.497 $\pm$ 0.019} & \\ 
\midrule 
\parbox[t]{0pt}{\multirow{5}{*}{\rotatebox[origin=c]{90}{LGBM}}} & Baseline & {0.398 $\pm$ 0.013} & {0.504 $\pm$ 0.015}\\ 
& Random & {0.509 $\pm$ 0.015} & \\ 
& SMOTE & {0.513 $\pm$ 0.019} & \\ 
& SVM-SM & {0.500 $\pm$ 0.016} & \\ 
& ADASYN & {0.510 $\pm$ 0.022} & \\ 
& Poly & \textbf{0.514 $\pm$ 0.014} & \\ 
\midrule 
\parbox[t]{0pt}{\multirow{5}{*}{\rotatebox[origin=c]{90}{XGB}}} & Baseline & {0.436 $\pm$ 0.013} & \textbf{0.515 $\pm$ 0.018}\\ 
& Random & {0.508 $\pm$ 0.022} & \\ 
& SMOTE & {0.513 $\pm$ 0.023} & \\ 
& SVM-SM & {0.507 $\pm$ 0.023} & \\ 
& ADASYN & {0.504 $\pm$ 0.019} & \\ 
& Poly & {0.509 $\pm$ 0.012} & \\ 
\midrule 
\parbox[t]{0pt}{\multirow{5}{*}{\rotatebox[origin=c]{90}{Cat}}} & Baseline & {0.444 $\pm$ 0.016} & \textbf{0.532 $\pm$ 0.020}\\ 
& Random & {0.525 $\pm$ 0.019} & \\ 
& SMOTE & {0.529 $\pm$ 0.022} & \\ 
& SVM-SM & {0.520 $\pm$ 0.024} & \\ 
& ADASYN & {0.523 $\pm$ 0.015} & \\ 
& Poly & {0.524 $\pm$ 0.012} & \\ 
\midrule 
\end{tabular}
    \caption{Mean $F_1$ score achieved by balancing the data and by optimizing the decision threshold. As detailed above and discussed in Section \ref{sec:consistent_eval}, except for very weak classifiers (MLP and SVM), balancing the data and optimizing the decision threshold yield comparable prediction performance.\vspace{-20pt}}
    \label{tab:f1_def_opt}
\end{table}

To compare balancing to optimizing the decision threshold, we compare the $F_1$ scores achieved by balancing the data and optimizing the threshold; see Table \ref{tab:f1_def_opt}.
\begin{description}[leftmargin=*]
\item[MLP \& SVM] For the weak MLP and SVM, best prediction was achieved by oversampling with SMOTE. Furthermore, prediction quality was considerably better compared to optimizing the decision threshold.
\item[Decision tree, Adaboost \& LGBM] Best $F_1$ was achieved by SVM-SMOTE or Poly. However, the score was only slightly better than the score achieved by optimizing the threshold.
\item[XGBoost \& Catboost] For the strongest classifiers, best prediction was achieved by optimizing the threshold (and without balancing). Nevertheless, the score was not significantly better than the score achieved by balancing.
\end{description}
To summarize, except for very weak classifiers (MLP and SVM), balancing the data and optimizing the decision threshold yield comparable prediction performance.

\subsection{Additional experiments}
\label{sec:additional_experiments}
To validate the robustness of our results and insensitivity to specific experimental design choices, we ran additional experiments while slightly modifying our setup. These include applying normalization to the data before oversampling it, using two validation folds (one for early stopping and one for HP selection) and using $5$-fold cross validation. All additional experiments yield results similar to the ones reported here. These additional experiments, along with full results for all metrics (\rocauc{}, \logloss{}, Brier score, $F_1$, $F_2$, Jaccard similarity coefficient and balanced accuracy), can be found in our code repository\footnote{\ifbool{anonymous}
{\href{https://github.com/anonymous}{https://github.com/anonymous}}
{\href{https://github.com/aws/to-smote-or-not}{https://github.com/aws/to-smote-or-not}}
}.
\section{Discussion and Recommendations}
\label{sec:discussion}
We first discuss how best prediction performance can generally be achieved and then describe some specific scenarios which require different methods. Usually, optimal oversampler HPs are not a-priori known and have to be learned from the data. To avoid text repetition, we assume the HPs not a-priori known unless stated otherwise.

\subsection{General recommendation}
Normally, balancing is not recommended. However, the reasons differ for proper metrics and \cms{}:

\smallskip
\noindent
\textbf{Proper metric.} When the objective metric is proper, best prediction is achieved by using a strong classifier and balancing the data is not required nor beneficial.

\smallskip
\noindent
\textbf{Label metric.} When using a \cm{} the decision threshold has to be somehow set. A common method is to use a fixed threshold of $0.5$. A known but less common technique is to optimize the decision threshold on the validation data after model training. Our experiments show that optimizing the decision threshold yield similar prediction quality to balancing the data. However, optimizing the decision threshold is recommended because it's more compute efficient: When optimizing the decision threshold, the model is trained once and probability predictions are inferred for the validation set. Then, optimizing the decision threshold only involves recalculating the objective metric using several thresholds as required by the optimization method. On the other hand, optimizing the oversampler HPs requires refitting the classifier several times which is considerably more compute expensive. Moreover, optimizing the threshold allows for a single trained model to be tuned to optimize multiple objective metrics.

\subsection{Scenarios where balancing is recommended}
\label{sec:balancing_good}
Although balancing is generally not recommended, there are a few scenarios in which oversampling should be applied:

\smallskip
\noindent
\textbf{Using a \cm{} \& threshold optimization is impossible.} In some cases, it's not possible to optimize the decision threshold, \eg, when using legacy classifiers or due to other system constraints. We have demonstrated empirically, that when using a fixed threshold, balancing is beneficial. However, for strong classifiers, the sophisticated SMOTE variants were not more effective than the simple random oversampler. Thus, when using a strong but non-consistent classifier, it is recommended to augment the data using the simple random oversampler.

\smallskip
\noindent
\textbf{Weak classifiers.} We have seen that SMOTE-like methods could improve the performance of weak classifiers such as MLP, SVM, decision tree, Adaboost and LGBM. Thus, when strong classifiers can't be used, it is recommended to augment the data using the SMOTE-like techniques. Nevertheless, the resulting prediction quality will be significantly worse compared to using a strong classifier (without oversampling).

\smallskip
\noindent
\textbf{\OptHPs.} When exceptionally good HPs are known, oversampling with the SMOTE variants improves prediction performance for all classifiers. Note that this could not be achieved by tuning the HPs on a validation set. Rather, we refer to HPs that are known \emph{a-priori} to yield good results, for example, from previous experience with similar data. This is not common and usually rises from domain expertise. Furthermore, as balancing greatly increases the number of HPs to be tuned, it's unclear how much of the gains demonstrated using the HPs that yield best results on the test data could be attributed to overfitting them.

\subsection{Other difficulty factors}
As mentioned in Section \ref{sec:intro}, some studies propose that skewed class distribution is not the root of challenge in imbalanced classification but rather that the difficulty arises from other phenomenons that are common in this setting~\cite{lopez2013insight,stefanowski2016dealing,sun2009classification}. We would like to emphasize that our experimental results were obtained using a large collection of real-life imbalanced datasets which probably exhibit many of the other contributing difficulty factors. Thus, our findings hold for real imbalanced data whether the source of difficulty is the skewed class distribution itself or other phenomenons.

\section{Related Work}
\label{sec:related_work}
Our results seem to contradict previous studies demonstrating the benefits of balancing. That is because most previous empirical work focused on the scenarios where balancing is beneficial, outlined in Section \ref{sec:balancing_good}. A closer look reveals that our results are, in fact, in accord with most previous study when carefully considering the scenarios under investigation. In particular, most previous works do not to include SOTA classifiers as baselines (some of these studies precede the invention of the strong boosting methods available to us today). For example, the initial work on SMOTE \cite{SMOTE} considered the weak C4.5 (decision tree) and Ripper algorithms; \cite{SVM-SMOTE} and \cite{Gazzah} experimented with support vector machine classifiers only; decision trees were considered in \cite{he2008adasyn}. Few large scale empirical studies of balancing schemes were conducted: \cite{kovacs2019empirical} compared $85$ balancing methods using $104$ datasets. In that study, balancing was beneficial due to the use of weak classifiers and \optHPs{}. The study of \cite{LOPEZ20126585} compared the \rocauc{} performance of several balancing methods and several classifiers dedicated to imbalanced data and found several SMOTE-like methods to be beneficial for SVM, decision tree and KNN classifiers. Several SMOTE variants were studied in \cite{batista2004study} and were shown to improve performance for C4.5. Bootstrapping techniques improved prediction quality of C4.5 in \cite{weiss2003learning}.

Studies of balancing that include SOTA classifiers are rare. For example, \cite{al2020predicting} demonstrated how SMOTE can improve the $F_1$ score of XGBoost when using a fixed decision threshold. Several specialized boosting schemes for imbalanced data were proposed, see the surveys of \cite{galar2011review,sun2009classification,tanha2020boosting}. However, these are based on and compared to the weak Adaboost.

As far as we could understand from the texts, in most studies of balancing that considered \cms{}, a fixed decision threshold of $0.5$ was used~\cite{kovacs2019empirical,gosain2017handling}. Other studies considered optimizing the decision threshold or similar methods but they were not compared to balancing~\cite{hernandez2012unified,thai2010learning}. Note that optimizing the decision threshold in order to account for imbalance is similar to fitting the bias in SVMs~\cite{nunez2011post}.

Several papers have questioned the utility of balancing. Recently, it was demonstrated in \cite{StopOversampling9761871} that some of the synthetic minority samples generated by several SMOTE variants should truly belong to the majority class --- potentially hindering the performance of a classifier trained on such data. The experiments of \cite{maloof2003learning} showed that optimizing the decision threshold is as effective as random oversampling. However, this study did not consider SMOTE but rather only the basic random oversampling and undersampling. Moreover, the study considered a single dataset only.
\section{Conclusion}
\label{sec:conclusion}

In this work, we evaluated the utility of balancing for imbalanced binary classification tasks by performing extensive experiments and including strong SOTA classifiers as baselines. We have empirically demonstrated that balancing could be beneficial for weak classifiers but not for strong ones. Furthermore, best prediction performance was achieved using the strong Catboost classifier and without balancing. While balancing is generally not recommended, several scenarios in which it should be applied were discussed. This clarifies the gap between our results and the benefits of oversampling demonstrated in previous studies that focus on these settings (where balancing is beneficial).

We have empirically demonstrated that balancing is effective in shifting the focus of the classifiers towards the minority class resulting in better prediction of the minority samples and worse prediction of the majority samples. This correlated with better prediction performance for the weak classifiers but not for the strong ones. We believe that understanding this disparity is an interesting avenue for future work. Another interesting research direction is evaluating the utility of specialized imbalanced classifier schemes and post-processing techniques using SOTA classifiers.

\begin{acks}
We would like to thank \ifbool{anonymous}{}{Saharon Rosset and Zohar Karnin for fruitful discussion and }the authors of \cite{keel,UCI_datasets,coil_dataset,crime_dataset,mammograph_dataset,promise} for allowing the use of their data.
\end{acks}

\begin{figure*}[p]
    \centering
    \begin{subfigure}{.5\textwidth}
        \centering
        \includegraphics[width=0.95\textwidth]{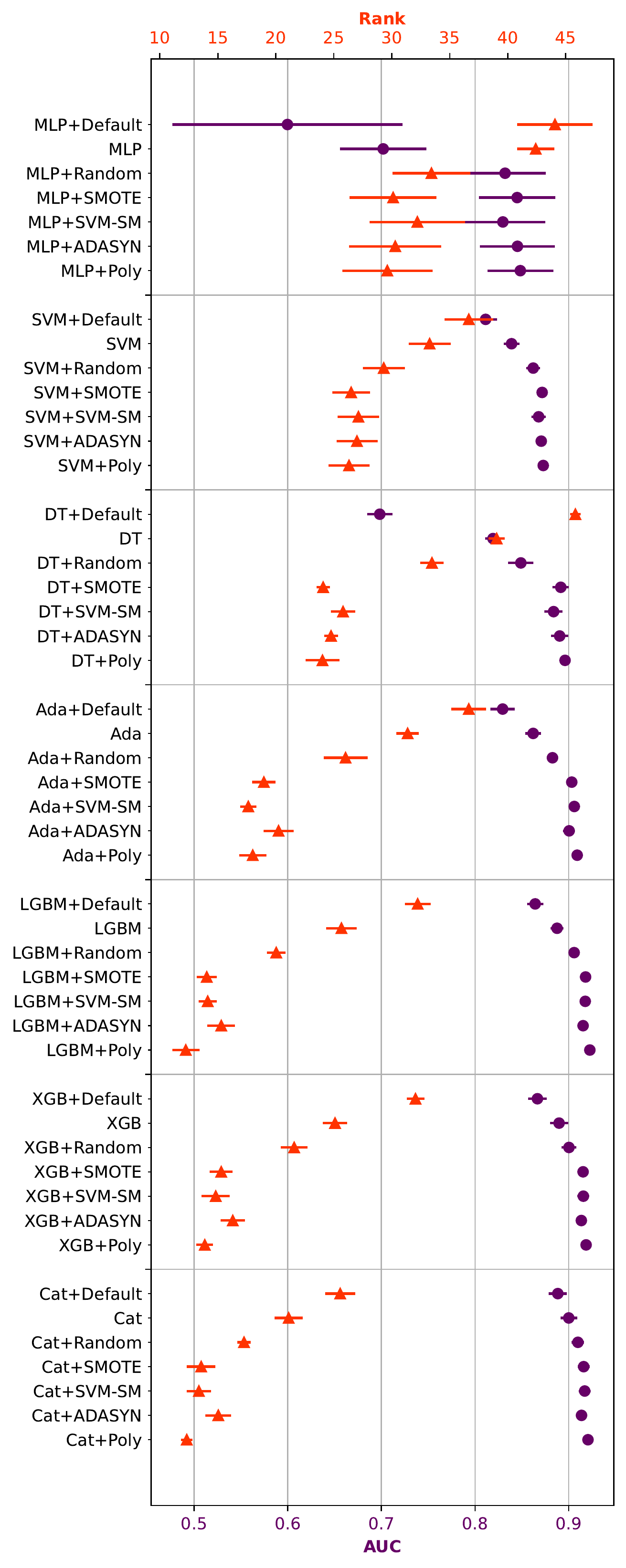}
        \vspace{-5pt}
        \caption{\OptHPs}
    \end{subfigure}%
    \begin{subfigure}{.5\textwidth}
        \centering
        \includegraphics[width=0.95\textwidth]{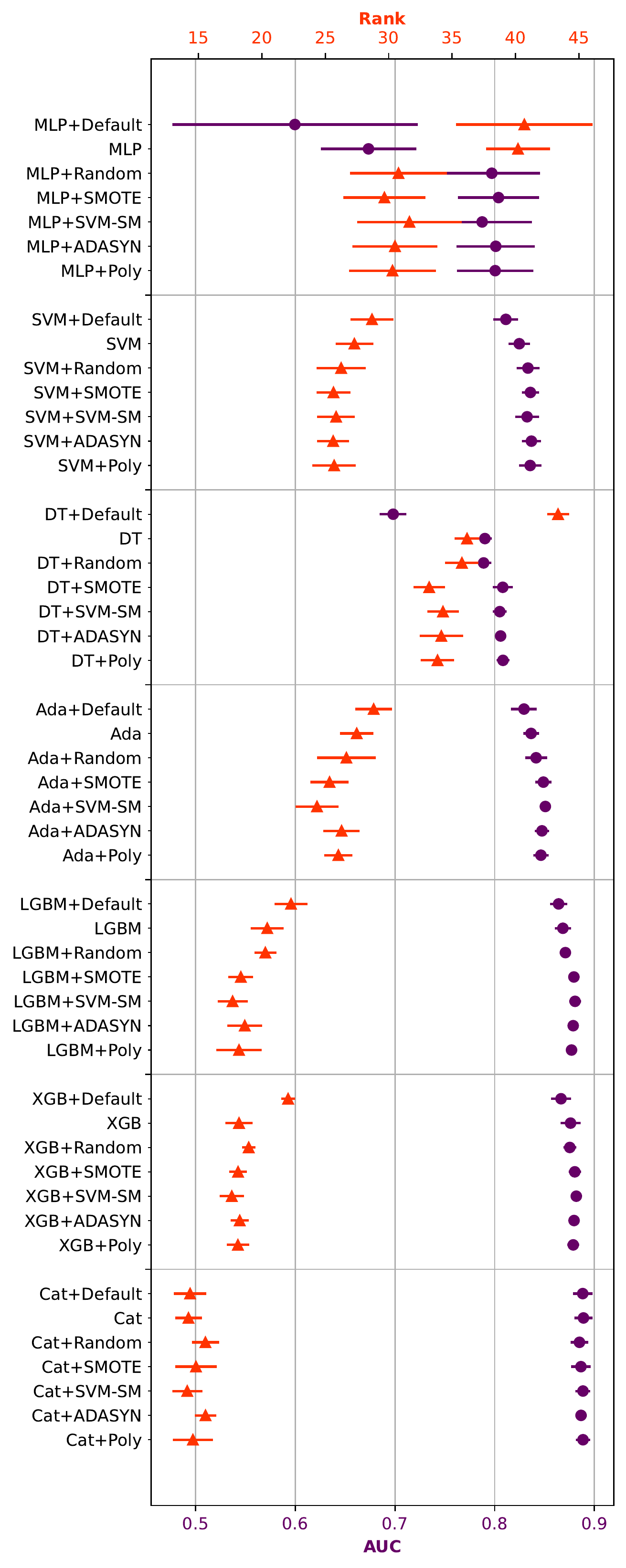}
        \vspace{-5pt}
        \caption{\ValHPs}
    \end{subfigure}
    \vspace{-5pt}
    \caption{Mean \rocauc{} and rank (a) using \optHPs{} and (b) using \valHPs. The purple dots represent \rocauc{} averaged over all datasets and the orange triangles corresponds to the rank (lower is better). As illustrated above and further detailed in Section \ref{sec:apriori-vs-validation}, oversampling the data does not yield significant improvements for the strong classifiers when considering \valHPs{}.}
    \label{fig:hps_auc}
\end{figure*}

\begin{table*}[p]
    \centering
    \begin{tabular}{lllllll}
& & \textbf{Logistic loss ($\cdot 10^{3}$)} & \textbf{Logistic loss ($\cdot 10^{3}$)} & \textbf{Logistic loss ($\cdot 10^{3}$)} & \textbf{AUC ($\cdot 10^{3}$)} & \textbf{AUC} \\ 
& & & \textbf{(minority samples)} & \textbf{(majority samples)} & \textbf{(mean)} & \textbf{(rank)}\\ 
\toprule 
\parbox[t]{0pt}{\multirow{5}{*}{\rotatebox[origin=c]{90}{MLP}}} & Default & 1084 $\pm$ 548 (-38\%) & 6370 $\pm$ 789 (+87\%) & 522 $\pm$ 608 (-67\%) & 600 $\pm$ 123 (-11\%) & 40.7 $\pm$ 5.4\\ 
& Baseline & 1760 $\pm$ 1397 & 3402 $\pm$ 771 & 1588 $\pm$ 1495 & 673 $\pm$ 48 & 40.2 $\pm$ 2.5\\ 
& Random & 1639 $\pm$ 1186 (-6.9\%) & 1770 $\pm$ 660 (-48\%) & 1600 $\pm$ 1238 (+0.8\%) & 797 $\pm$ 48 (+18\%) & 30.8 $\pm$ 3.8\\ 
& SMOTE & 1494 $\pm$ 1279 (-15\%) & 1774 $\pm$ 626 (-48\%) & 1477 $\pm$ 1353 (-7.0\%) & 804 $\pm$ 41 (+19\%) & 29.7 $\pm$ 3.2\\ 
& SVM-SM & 1556 $\pm$ 1260 (-12\%) & 2054 $\pm$ 845 (-40\%) & 1499 $\pm$ 1302 (-5.6\%) & 787 $\pm$ 50 (+17\%) & 31.6 $\pm$ 4.1\\ 
& ADASYN & 1603 $\pm$ 1213 (-8.9\%) & 1751 $\pm$ 732 (-49\%) & 1586 $\pm$ 1294 (-0.1\%) & 801 $\pm$ 39 (+19\%) & 30.5 $\pm$ 3.4\\ 
& Poly & 1533 $\pm$ 1191 (-13\%) & 1755 $\pm$ 610 (-48\%) & 1498 $\pm$ 1224 (-5.7\%) & 801 $\pm$ 38 (+19\%) & 30.3 $\pm$ 3.4\\ 
\midrule 
\parbox[t]{0pt}{\multirow{5}{*}{\rotatebox[origin=c]{90}{SVM}}} & Default & 4069 $\pm$ 656 (+8.0\%) & 24207 $\pm$ 703 (+4.1\%) & 2408 $\pm$ 748 (+19\%) & 811 $\pm$ 12 (-1.6\%) & 28.7 $\pm$ 1.7\\ 
& Baseline & 3768 $\pm$ 553 & 23243 $\pm$ 813 & 2025 $\pm$ 639 & 825 $\pm$ 11 & 27.3 $\pm$ 1.5\\ 
& Random & 4988 $\pm$ 380 (+32\%) & 15275 $\pm$ 702 (-34\%) & 4159 $\pm$ 586 (+105\%) & 833 $\pm$ 12 (+1.1\%) & 26.3 $\pm$ 1.9\\ 
& SMOTE & 5036 $\pm$ 656 (+34\%) & 14744 $\pm$ 948 (-37\%) & 4278 $\pm$ 801 (+111\%) & 836 $\pm$ 9 (+1.4\%) & 25.6 $\pm$ 1.3\\ 
& SVM-SM & 4172 $\pm$ 443 (+11\%) & 16654 $\pm$ 901 (-28\%) & 3203 $\pm$ 526 (+58\%) & 833 $\pm$ 12 (+0.9\%) & 25.9 $\pm$ 1.5\\ 
& ADASYN & 5388 $\pm$ 477 (+43\%) & 14722 $\pm$ 549 (-37\%) & 4711 $\pm$ 510 (+133\%) & 837 $\pm$ 9 (+1.5\%) & 25.6 $\pm$ 1.3\\ 
& Poly & 5477 $\pm$ 351 (+45\%) & 14817 $\pm$ 594 (-36\%) & 4821 $\pm$ 385 (+138\%) & 836 $\pm$ 11 (+1.3\%) & 25.7 $\pm$ 1.7\\ 
\midrule 
\parbox[t]{0pt}{\multirow{5}{*}{\rotatebox[origin=c]{90}{DT}}} & Default & 3219 $\pm$ 99 (+180\%) & 18686 $\pm$ 943 (+84\%) & 1993 $\pm$ 66 (+310\%) & 698 $\pm$ 13 (-12\%) & 43.4 $\pm$ 0.9\\ 
& Baseline & 1148 $\pm$ 80 & 10154 $\pm$ 554 & 486 $\pm$ 94 & 790 $\pm$ 7 & 36.2 $\pm$ 1.0\\ 
& Random & 1327 $\pm$ 112 (+16\%) & 9879 $\pm$ 572 (-2.7\%) & 704 $\pm$ 95 (+45\%) & 789 $\pm$ 8 (-0.2\%) & 35.8 $\pm$ 1.3\\ 
& SMOTE & 1339 $\pm$ 69 (+17\%) & 7834 $\pm$ 474 (-23\%) & 843 $\pm$ 27 (+74\%) & 808 $\pm$ 10 (+2.3\%) & 33.2 $\pm$ 1.2\\ 
& SVM-SM & 1228 $\pm$ 65 (+7.0\%) & 8329 $\pm$ 565 (-18\%) & 718 $\pm$ 70 (+48\%) & 805 $\pm$ 7 (+1.9\%) & 34.3 $\pm$ 1.2\\ 
& ADASYN & 1363 $\pm$ 102 (+19\%) & 7678 $\pm$ 280 (-24\%) & 874 $\pm$ 99 (+80\%) & 806 $\pm$ 4 (+2.0\%) & 34.2 $\pm$ 1.7\\ 
& Poly & 1225 $\pm$ 140 (+6.6\%) & 7263 $\pm$ 449 (-28\%) & 805 $\pm$ 164 (+66\%) & 808 $\pm$ 6 (+2.3\%) & 33.9 $\pm$ 1.3\\ 
\midrule 
\parbox[t]{0pt}{\multirow{5}{*}{\rotatebox[origin=c]{90}{Ada}}} & Default & 462 $\pm$ 8 (+7.7\%) & 969 $\pm$ 128 (-8.1\%) & 420 $\pm$ 7 (+11\%) & 829 $\pm$ 13 (-0.8\%) & 28.8 $\pm$ 1.4\\ 
& Baseline & 429 $\pm$ 7 & 1055 $\pm$ 160 & 379 $\pm$ 10 & 837 $\pm$ 8 & 27.5 $\pm$ 1.3\\ 
& Random & 445 $\pm$ 10 (+3.7\%) & 970 $\pm$ 151 (-8.0\%) & 401 $\pm$ 9 (+5.9\%) & 842 $\pm$ 11 (+0.6\%) & 26.7 $\pm$ 2.3\\ 
& SMOTE & 460 $\pm$ 13 (+7.1\%) & 919 $\pm$ 145 (-13\%) & 419 $\pm$ 4 (+11\%) & 849 $\pm$ 8 (+1.5\%) & 25.3 $\pm$ 1.5\\ 
& SVM-SM & 450 $\pm$ 10 (+4.8\%) & 882 $\pm$ 96 (-16\%) & 414 $\pm$ 6 (+9.3\%) & 851 $\pm$ 4 (+1.7\%) & 24.3 $\pm$ 1.7\\ 
& ADASYN & 469 $\pm$ 7 (+9.3\%) & 930 $\pm$ 157 (-12\%) & 429 $\pm$ 11 (+13\%) & 847 $\pm$ 7 (+1.3\%) & 26.3 $\pm$ 1.4\\ 
& Poly & 484 $\pm$ 9 (+13\%) & 777 $\pm$ 92 (-26\%) & 459 $\pm$ 13 (+21\%) & 846 $\pm$ 8 (+1.2\%) & 26.0 $\pm$ 1.1\\ 
\midrule 
\parbox[t]{0pt}{\multirow{5}{*}{\rotatebox[origin=c]{90}{LGBM}}} & Default & 185 $\pm$ 2 (+1.2\%) & 1789 $\pm$ 37 (+0.8\%) & 74 $\pm$ 3 (+0.9\%) & 864 $\pm$ 9 (-0.5\%) & 22.3 $\pm$ 1.3\\ 
& Baseline & 182 $\pm$ 2 & 1775 $\pm$ 37 & 73 $\pm$ 3 & 868 $\pm$ 8 & 20.4 $\pm$ 1.3\\ 
& Random & 193 $\pm$ 5 (+6.0\%) & 1488 $\pm$ 62 (-16\%) & 108 $\pm$ 4 (+47\%) & 871 $\pm$ 6 (+0.3\%) & 20.3 $\pm$ 0.9\\ 
& SMOTE & 192 $\pm$ 6 (+5.1\%) & 1437 $\pm$ 51 (-19\%) & 109 $\pm$ 5 (+48\%) & 879 $\pm$ 4 (+1.3\%) & 18.3 $\pm$ 1.0\\ 
& SVM-SM & 188 $\pm$ 4 (+3.0\%) & 1477 $\pm$ 55 (-17\%) & 104 $\pm$ 4 (+41\%) & 881 $\pm$ 6 (+1.4\%) & 17.7 $\pm$ 1.2\\ 
& ADASYN & 197 $\pm$ 6 (+7.8\%) & 1400 $\pm$ 55 (-21\%) & 118 $\pm$ 5 (+62\%) & 879 $\pm$ 4 (+1.2\%) & 18.7 $\pm$ 1.4\\ 
& Poly & 194 $\pm$ 7 (+6.6\%) & 1425 $\pm$ 56 (-20\%) & 116 $\pm$ 9 (+58\%) & 877 $\pm$ 6 (+1.0\%) & 18.2 $\pm$ 1.8\\ 
\midrule 
\parbox[t]{0pt}{\multirow{5}{*}{\rotatebox[origin=c]{90}{XGB}}} & Default & 187 $\pm$ 1 (+4.6\%) & 1620 $\pm$ 43 (+0.6\%) & 90 $\pm$ 4 (+9.7\%) & 867 $\pm$ 10 (-1.1\%) & 22.1 $\pm$ 0.5\\ 
& Baseline & 179 $\pm$ 3 & 1610 $\pm$ 43 & 82 $\pm$ 4 & 876 $\pm$ 10 & 18.2 $\pm$ 1.1\\ 
& Random & 193 $\pm$ 4 (+7.6\%) & 1417 $\pm$ 58 (-12\%) & 113 $\pm$ 6 (+38\%) & 875 $\pm$ 6 (-0.1\%) & 19.0 $\pm$ 0.5\\ 
& SMOTE & 194 $\pm$ 5 (+8.2\%) & 1368 $\pm$ 55 (-15\%) & 117 $\pm$ 2 (+42\%) & 880 $\pm$ 6 (+0.5\%) & 18.1 $\pm$ 0.7\\ 
& SVM-SM & 189 $\pm$ 4 (+5.4\%) & 1394 $\pm$ 52 (-13\%) & 112 $\pm$ 5 (+36\%) & 882 $\pm$ 6 (+0.7\%) & 17.6 $\pm$ 1.0\\ 
& ADASYN & 195 $\pm$ 5 (+9.0\%) & 1351 $\pm$ 46 (-16\%) & 121 $\pm$ 4 (+47\%) & 880 $\pm$ 5 (+0.4\%) & 18.3 $\pm$ 0.7\\ 
& Poly & 193 $\pm$ 6 (+7.7\%) & 1356 $\pm$ 54 (-16\%) & 118 $\pm$ 7 (+44\%) & 879 $\pm$ 6 (+0.3\%) & 18.1 $\pm$ 0.9\\ 
\midrule 
\parbox[t]{0pt}{\multirow{5}{*}{\rotatebox[origin=c]{90}{Cat}}} & Default & 170 $\pm$ 1 (+0.1\%) & 1597 $\pm$ 32 (-0.4\%) & 74 $\pm$ 3 (+0.6\%) & 888 $\pm$ 10 (-0.1\%) & 14.3 $\pm$ 1.3\\ 
& Baseline & 170 $\pm$ 1 & 1603 $\pm$ 31 & 73 $\pm$ 3 & 889 $\pm$ 9 & 14.2 $\pm$ 1.1\\ 
& Random & 179 $\pm$ 3 (+5.4\%) & 1339 $\pm$ 52 (-16\%) & 104 $\pm$ 4 (+41\%) & 885 $\pm$ 9 (-0.5\%) & 15.6 $\pm$ 1.1\\ 
& SMOTE & 180 $\pm$ 5 (+5.9\%) & 1299 $\pm$ 63 (-19\%) & 106 $\pm$ 2 (+44\%) & 887 $\pm$ 10 (-0.3\%) & 14.8 $\pm$ 1.6\\ 
& SVM-SM & 177 $\pm$ 3 (+4.3\%) & 1356 $\pm$ 62 (-15\%) & 101 $\pm$ 4 (+38\%) & 889 $\pm$ 7 (-0.1\%) & 14.1 $\pm$ 1.2\\ 
& ADASYN & 183 $\pm$ 3 (+7.8\%) & 1269 $\pm$ 40 (-21\%) & 113 $\pm$ 2 (+54\%) & 887 $\pm$ 6 (-0.3\%) & 15.6 $\pm$ 0.8\\ 
& Poly & 181 $\pm$ 5 (+6.5\%) & 1292 $\pm$ 34 (-19\%) & 107 $\pm$ 5 (+46\%) & 889 $\pm$ 7 (-0.1\%) & 14.6 $\pm$ 1.6\\ 
\midrule 
\end{tabular}
    \caption{Mean \logloss{} (lower is better) and \rocauc{} with \valHPs. We report (from left to right): the \logloss{}; the \logloss{} of the minority and majority classes; and average \rocauc{} and rank. The numbers reported in brackets are the difference from the baseline (in percentage). The \logloss{} and \rocauc{} scores are multiplied by $10^3$.}
    \label{tab:loglossclasses}
\end{table*}
\begin{figure*}[p]
    \centering
    \begin{subfigure}{.5\textwidth}
        \centering
        \includegraphics[width=0.95\textwidth]{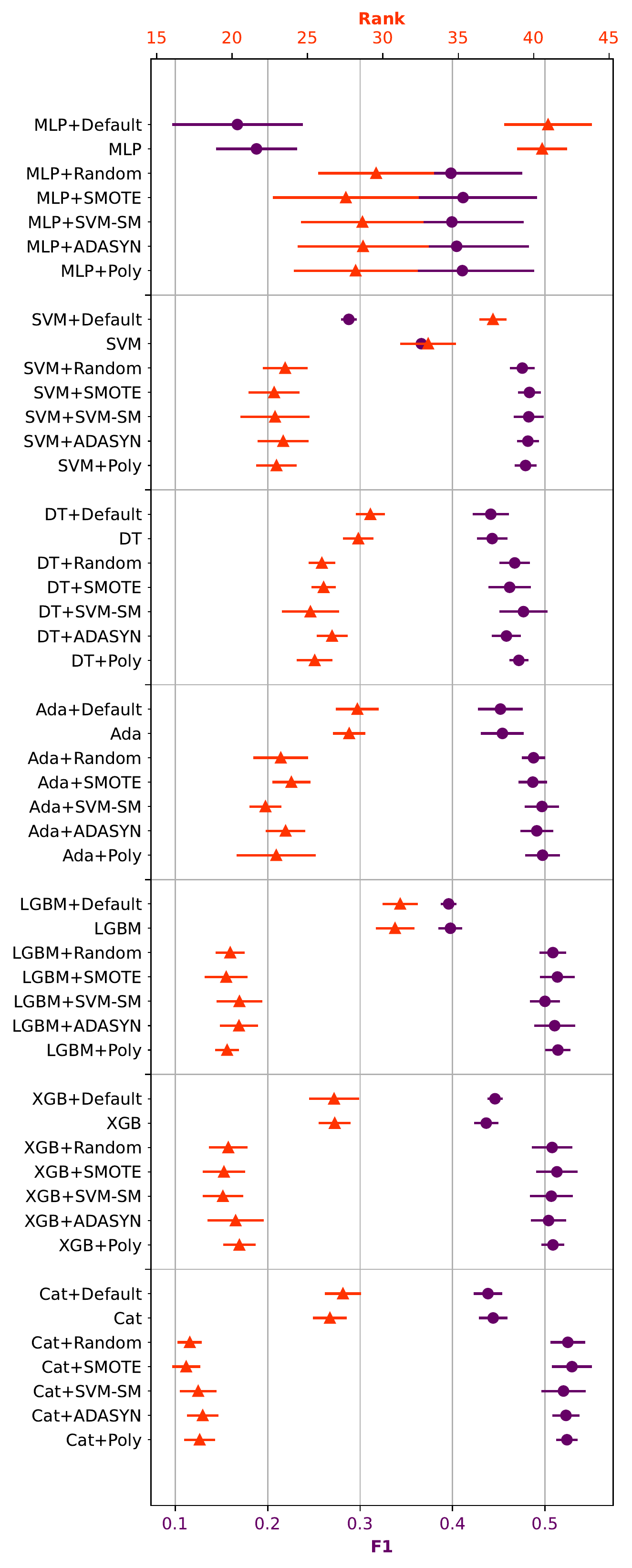}
        \vspace{-5pt}
        \caption{Decision threshold = $0.5$}
    \end{subfigure}%
    \begin{subfigure}{.5\textwidth}
        \centering
        \includegraphics[width=0.95\textwidth]{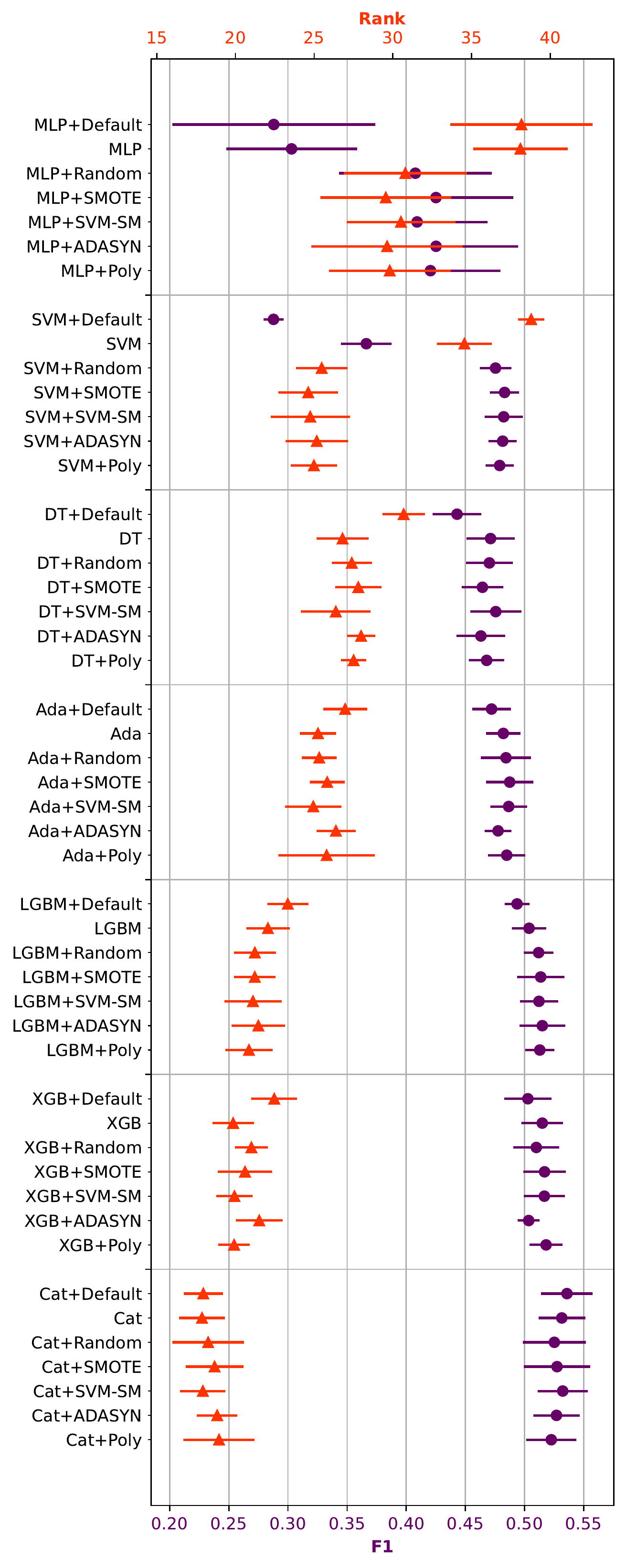}
        \vspace{-5pt}
        \caption{Optimized decision threshold}
    \end{subfigure}
    \vspace{-5pt}
    \caption{Mean $F_1$ score and rank with \optHPs{} (a) using a fixed decision threshold of $0.5$ and (b) optimizing the decision threshold on the validation set. As demonstrated above, while oversampling the data is beneficial when using a fixed threshold, it does not improve prediction quality when the threshold is optimized, see further discussion in Section \ref{sec:consistent_eval}.}
    \label{fig:f1_consistent}
\end{figure*}

\clearpage
\bibliographystyle{ACM-Reference-Format}
\bibliography{references}

\end{document}